\documentclass[conference,letterpaper]{IEEEtran}

\addtolength{\topmargin}{9mm}

\usepackage[T1]{fontenc}
\usepackage{url}
\usepackage{ifthen}
\usepackage{cite}
\usepackage[cmex10]{amsmath}
\usepackage{float}

\interdisplaylinepenalty=2500
\hyphenation{op-tical net-works semi-conduc-tor}

\usepackage{amsmath,amssymb}
\usepackage{esvect} 
\usepackage[mathscr]{euscript} 
\usepackage{amsthm} 
\usepackage{tikz} 
\usepackage{graphicx} 
\newcommand*{\Scale}[2][4]{\scalebox{#1}{\ensuremath{#2}}} 
\usepackage{framed} 
\usepackage{dblfloatfix} 
\usepackage{multirow}
\usepackage{color}
\usepackage{algorithmic}
\usepackage{subfigure}
\usepackage{longtable}
\usepackage[hidelinks, breaklinks=true]{hyperref} 
\usepackage{xcolor}  \definecolor{shadecolor}{rgb}{.95,.95,.95}  
\usepackage[font=footnotesize]{caption}
\usepackage{nicefrac}
\usepackage{dsfont} 
\usepackage[export]{adjustbox} 
\usepackage{cancel} 
\usepackage{enumitem}	
\usepackage{wrapfig}
\usepackage[utf8]{inputenc}
\usepackage{comment}
\usepackage{soul} 
\usepackage{pifont} 

\usepackage[ruled,vlined]{algorithm2e}

\theoremstyle{definition}

\newtheorem{myTheorem}{Theorem}
\newtheorem{myLemma}{Lemma}
\newtheorem{myCorollary}{Corollary}

\newtheorem{myExample}{Example}


\renewenvironment{shaded*}{%
\MakeFramed{\advance\hsize-\width\FrameRestore}%
\leftskip 1.5em \rightskip 1.5em}%
{\endMakeFramed}%

\renewenvironment{shaded}{%
\MakeFramed{\advance\hsize-\width\FrameRestore}%
\leftskip -.5em \rightskip -.5em}%
{\endMakeFramed}%


\def \R{\mathbb{R}}
\def \V{\mathbf{V}}

\def \Gr{\mathbb G}

\def \1{{\mathds{1}}}
\def \T{\mathsf{T}}

\def \spn{{\rm span}}
\def \rnk{{\rm rank}}

\def \<{\langle}
\def \>{\rangle}

\def \Fr{{{\rm F}}}

\def \r{{\rm r}}
\def \m{{\rm m}}
\def \N{{\rm N}}

\def \x{{\boldsymbol{{\rm x}}}}
\def \a{{\boldsymbol{{\rm a}}}}
\def \b{{\boldsymbol{{\rm b}}}}

\def \al{{\boldsymbol{\alpha}}}


\def \I{{\boldsymbol{{\rm I}}}}
\def \U{{\boldsymbol{{\rm U}}}}
\def \V{{\boldsymbol{{\rm V}}}}
\def \W{{\boldsymbol{\hat{{\rm U}}}}}
\def \A{{\boldsymbol{{\rm A}}}}
\def \B{{\boldsymbol{{\rm B}}}}

\def \Z{{\boldsymbol{{\rm Z}}}}
\def \P{{\boldsymbol{{\rm P}}}}
\def \M{{\boldsymbol{{\rm M}}}}

\def \i{{\rm i}}
\def \j{{\rm j}}

\def \o{{\Omega}}
\def \O{{\boldsymbol{\Omega}}}

\def \sU{{\mathbb{U}}}
\def \sV{{\mathbb{V}}}
\def \sW{{\hat{\mathbb{U}}}}
\def \dist{{d}}

\begin{document}
\title{A Perturbation Bound on the Subspace Estimator from Canonical Projections} 

\author{%
  \IEEEauthorblockN{Karan Srivastava, Daniel Pimentel-Alarc\'on}
  \IEEEauthorblockA{University of Wisconsin-Madison\\
                    Department of Mathematics, Department of Biostatistics, Wisconsin Institute for Discovery \\
                    Email: \{ksrivastava4, pimentelalar\}@wisc.edu}
}

\maketitle

\begin{abstract}
This paper derives a perturbation bound on the optimal subspace estimator obtained from a subset of its canonical projections contaminated by noise. This fundamental result has important implications in matrix completion, subspace clustering, and related problems.
\end{abstract}

\section{Introduction}

 This paper presents a perturbation bound on the optimal subspace estimator obtained from noisy versions of its canonical projections. More precisely, let $\sU \subset \R^\m$ be an $\r$-dimensional subspace in general position. Given $\o_\i \subset \{1,\dots,\m\}$, let $\sU^\o_\i \subset \R^{|\o_\i|}$ denote the projection of $\sU$ onto the canonical coordinates in $\o_\i$ (i.e. the usual coordinate planes with the standard basis). Let $\U^\o_\i \in \R^{|\o_\i| \times \r}$ be a basis of $\sU^\o_\i$, let $\Z^\o_\i$ denote a $|\o_\i|\times \r$ noise matrix, and define
\begin{align}
\label{vEq}
\V^\o_\i \ = \ \U^\o_\i + \Z^\o_\i.
\end{align}
Given $\V^\o_1,\dots,\V^\o_\N$, our goal is to estimate $\sU$ (see Figure \ref{intuitionFig}, where $\sV^\o_\i:=\spn\V^\o_\i$).

\begin{figure}[b]
\centering
\includegraphics[width=3.45cm]{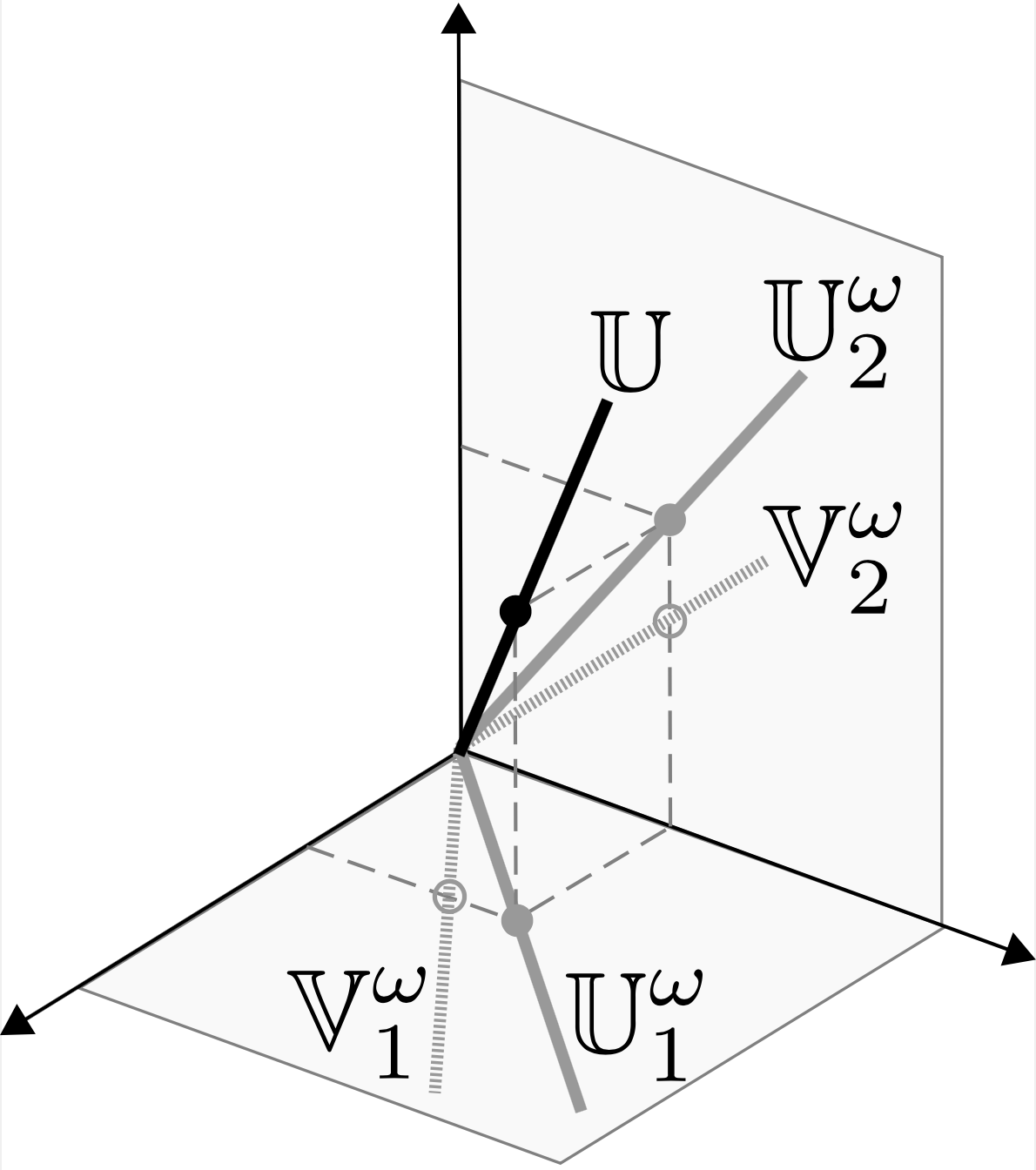}
\caption{When can $\sU$ be identified from the {\em noisy} canonical projections $\{\sV^\o_\i\}$?}
\label{intuitionFig}
\end{figure}

\begin{myExample}
\begin{em}
Consider the subspace $\sU$ spanned by $\U$ below, and suppose we only observe its projections $\U^\o_1,\U^\o_2,\U^\o_3$:
\begin{align*}
\small
\U = \left[
\begin{matrix}
1 & 1\\
1 & 2\\
1 & 3\\
1 & 4\\
1 & 5\\
\end{matrix}\right],
\begin{matrix}
\U^\o_1=\left[ \begin{matrix} 1 & 1 \\ 1 & 2 \\ 1 & 3 \end{matrix}\right]
\\ \\ \\
\end{matrix},
\begin{matrix}
\\ 
\U^\o_2=\left[ \begin{matrix} 2 & 2 \\ 2 & 3 \\ 2 & 4 \end{matrix}\right]
\\ \\
\end{matrix},
\begin{matrix}
\\ \\
\U^\o_3=\left[ \begin{matrix} 3 & 3 \\ 3 & 4 \\ 3 & 5 \end{matrix}\right]
\end{matrix}.
\end{align*} 
Here the first projection $\U^\o_1$ only includes the first three coordinates (i.e., $\o_1=\{1,2,3\}$), the second projection $\U^\o_2$ only includes the three middle coordinates ($\o_2=\{2,3,4\}$), and the third projection only includes the last three coordinates ($\o_3=\{3,4,5\}$). Notice that the bases themselves may differ, but their spans (onto the projected coordinates) agree with $\sU$. Given noisy versions of $\U^\o_1,\U^\o_2,\U^\o_3$ (which we call $\V^\o_1,\V^\o_2,\V^\o_3$; see \eqref{vEq}), our goal is to estimate $\sU$.
\end{em}
\end{myExample}

Estimation of this sort is only possible under certain conditions on the observed projections \cite{pimentel2015deterministic}. In this paper, we will assume without loss of generality that:
\begin{shaded*}
\begin{itemize}
\item[\textbf{C1.}]
Each projection includes exactly $\r+1$ coordinates, i.e., $|\o_\i|=\r+1$.

\item[\textbf{C2.}]
There is a total of $\N=\m-\r$ available projections.

\item[\textbf{C3.}]
Every subset of $\ell$ projections includes at least $\ell+\r$ coordinates.

\item[\textbf{C4.}]
$\Z^\o_\i$ is drawn independently according to a density $\nu$ such that its spectral norm satisfies $\|\Z^\o_\i\|_2 \leq \epsilon$.

\item[\textbf{C5.}]
The smallest singular value of $\V^\o_\i$ is lower bounded by $\delta$.
\end{itemize}
\end{shaded*}

To better understand these conditions, first notice that since $\sU$ is $\r$-dimensional and in general position, if $|\o_\i|\leq\r$, then $\sV^\o_\i=\R^{|\o_\i|}$, in which case $\sV^\o_\i$ provides no information about $\sU$. Hence, rather than an assumption, \textbf{C1} is a fundamental requirement that describes the minimal sampling conditions on each projection. Under this minimal sampling regime, \textbf{C2}-\textbf{C3} are strictly necessary to reconstruct $\sU$ from the {\em noiseless} projections $\{\sU^\o_\i\}$ \cite{pimentel2015deterministic}. So, rather than assumptions, \textbf{C2}-\textbf{C3} are also fundamental requirements (see Theorem 3 in \cite{pimentel2015deterministic} for a simple generalization to denser samplings). Condition \textbf{C4} simply requires that the noise is bounded by $\epsilon$. Similarly, \textbf{C5} requires that the signal {\em power} in each projection is at least $\delta$. This way, $\delta/\epsilon$ can be thought of as the signal-to-noise ratio.

\textbf{In this paper} we bound the error of the optimal estimator $\sW$ of $\sU$ derived in \cite{pimentel2015deterministic}. Such estimator $\sW$ is \textit{optimal} in the sense that it is the only subspace of the same dimension as $\sU$ that perfectly {\em agrees} with the observed projections (i.e., $\sW^\o_\i=\sV^\o_\i$ for every $\i$). To construct $\sW$, first observe that since $\sU$ is $\r$-dimensional and in general position, and $|\o_\i|=\r+1$, $\sU^\o_\i$ is a hyperplane (i.e., an $\r$-dimensional subspace of $\R^{\r+1}$). Then so is $\sV^\o_\i$ $\nu$-almost surely. Let $\b^\o_\i \in \R^{\r+1}$ denote a normal vector orthogonal to $\sV^\o_\i$, i.e., a basis of the null-space of $\sV^\o_\i$. Define $\b_\i$ as the vector in $\R^\m$ with the entries of $\b^\o_\i$ in the coordinates of $\o_\i$ and zeros elsewhere. Finally, let $\B=[\b_1,\dots,\b_\N]$. It is easy to see that by construction, any $\r$-dimensional subspace in $\ker \B^\T$ agrees with the projections $\{\sV^\o_\i\}$. Moreover, since the samplings $\{\o_\i\}$ satisfy \textbf{C1}-\textbf{C3}, by Theorem 1 in \cite{pimentel2015deterministic} there is only one such subspace. Consequently, $\ker \B^\T$ is the only $\r$-dimensional subspace of $\R^\m$ that agrees with the projections $\{\sV^\o_\i\}$. In other words, the \textit{optimal} estimator of $\sU$ is given by
\begin{align}
\label{estimatorEq}
\sW \ = \ \ker\B^\T.
\end{align}
Our perturbation bound is measured in terms of the {\em chordal} distance over the Grassmannian between the {\em true} subspace $\sU$ and its optimal estimate $\sW$, defined as follows \cite{ye2016schubert}:
\begin{align*}
\dist_\Gr(\sU,\sW) \ &:= \ \frac{1}{\sqrt{2}} \ \|\P_\U - \P_\W \|_\Fr,
\end{align*}
where $\P_\U,\P_\W$ are the projection operators onto $\sU,\sW$, and $\|\boldsymbol{\cdot}\|_\Fr$ denotes the Frobenius norm. We use $\sigma(\boldsymbol{\cdot})$ to denote the {\em smallest} singular value.

With this, we are ready to present our main result, which bounds the optimal estimator error:
\begin{framed}
\begin{myTheorem}
\label{mainThm}
Let \textbf{C1}-\textbf{C5} hold. Then $\nu$-almost surely,
\begin{align}
\label{mainBound}
\dist_\Gr(\sU,\sW) \ \leq \ \frac{\epsilon \sqrt{2\r(\m-\r)} }{\delta \sigma(\B)}.
\end{align}
\end{myTheorem}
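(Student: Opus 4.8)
I would prove Theorem~\ref{mainThm} by comparing $\sW=\ker\B^\T$ against the exact noiseless analogue of its construction. For each $\i$ let $\a^\o_\i\in\R^{\r+1}$ be a unit normal to the hyperplane $\sU^\o_\i$, let $\a_\i\in\R^\m$ be its zero-padding onto the coordinates in $\o_\i$ (mirroring the definition of $\b_\i$), and set $\A=[\a_1,\dots,\a_\N]$. By exactly the reasoning the excerpt gives for $\sW$ -- conditions \textbf{C1}--\textbf{C3}, Theorem~1 of \cite{pimentel2015deterministic}, and general position of $\sU$ -- we have $\sU=\ker\A^\T$ with $\A$ of full column rank, and $\B$ is likewise full rank $\nu$-almost surely; hence $\sU^\perp=\spn\A$ and $\sW^\perp=\spn\B$, so $\P_\A:=\I-\P_\U$ and $\P_\B:=\I-\P_\W$ are the orthoprojectors onto $\spn\A,\spn\B$. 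Writing $\Theta$ for the $\r$ principal angles between $\sU$ and $\sW$, the chordal distance is $\dist_\Gr(\sU,\sW)=\|\sin\Theta\|_\Fr\le\sqrt{\r}\,\|\sin\Theta\|_2$, and since $\P_\U-\P_\W=-(\P_\A-\P_\B)$ we have $\|\sin\Theta\|_2=\|\P_\A-\P_\B\|_2$, the operator-norm distance between $\spn\A$ and $\spn\B$. It thus suffices to bound $\|\P_\A-\P_\B\|_2$ together with the columnwise perturbation $\|\A-\B\|_\Fr$.

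For the subspace part I would use the standard sine--theta estimate for column spans. If $\M$ is an orthonormal basis of $\spn\B$, then $\M=\B\C$ for an invertible $\C$ with $\|\C\|_2=1/\sigma(\B)$, and the singular values of $(\I-\P_\A)\M$ are exactly $\sin\Theta(\spn\A,\spn\B)$. Because $(\I-\P_\A)\A=\mathbf 0$, we get $(\I-\P_\A)\B=(\I-\P_\A)(\B-\A)$, hence $\|\P_\A-\P_\B\|_2=\|(\I-\P_\A)(\B-\A)\C\|_2\le\|\B-\A\|_2/\sigma(\B)\le\|\B-\A\|_\Fr/\sigma(\B)$.

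The core of the argument is the per-projection normal perturbation. Since $(\U^\o_\i)^\T\a^\o_\i=\mathbf 0$, equation \eqref{vEq} gives $(\V^\o_\i)^\T\a^\o_\i=(\Z^\o_\i)^\T\a^\o_\i$, so $\|(\V^\o_\i)^\T\a^\o_\i\|\le\|\Z^\o_\i\|_2\le\epsilon$ by \textbf{C4}. As $\V^\o_\i$ has smallest singular value at least $\delta$ (\textbf{C5}), the projection of $\a^\o_\i$ onto $\spn\V^\o_\i$ has norm at most $\|(\V^\o_\i)^\T\a^\o_\i\|/\delta\le\epsilon/\delta$; since $\spn\V^\o_\i$ is precisely the orthogonal complement of $\b^\o_\i$ inside $\R^{\r+1}$, this quantity is $\sin\angle(\a^\o_\i,\b^\o_\i)\le\epsilon/\delta$. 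Fixing the sign of each $\b^\o_\i$ so that $(\a^\o_\i)^\T\b^\o_\i\ge0$ then yields $\|\a_\i-\b_\i\|=\|\a^\o_\i-\b^\o_\i\|\le\sqrt{2}\,\epsilon/\delta$ (and $\le\sqrt2$ unconditionally), and summing over $\i=1,\dots,\N=\m-\r$ gives $\|\A-\B\|_\Fr\le\epsilon\sqrt{2(\m-\r)}/\delta$. Chaining the three estimates, $\dist_\Gr(\sU,\sW)\le\sqrt{\r}\,\|\B-\A\|_\Fr/\sigma(\B)\le\epsilon\sqrt{2\r(\m-\r)}/(\delta\,\sigma(\B))$, which is \eqref{mainBound}.

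I expect the main obstacle to be precisely the per-projection step. The naive route -- expanding $\b^\o_\i$ in a basis of $\sU^\o_\i$ -- introduces a factor $1/\sigma(\U^\o_\i)$, over which the hypotheses give no control (\textbf{C5} bounds only $\sigma(\V^\o_\i)$); the remedy is to measure the angle from the $\spn\V^\o_\i$ side, using that $\a^\o_\i$ already annihilates $\U^\o_\i$ so that $(\V^\o_\i)^\T\a^\o_\i$ is pure noise. A secondary point requiring care is the reduction in the first paragraph: one must confirm that $\ker\A^\T$ equals $\sU$ exactly and that $\A$ and $\B$ are genuinely full column rank, which is where \textbf{C1}--\textbf{C3} and \cite{pimentel2015deterministic} enter.
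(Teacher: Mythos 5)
Your proof is correct and follows the same skeleton as the paper's (compare the noiseless stacked--normal matrix $\A$ with $\B$, pass to orthogonal complements, and control $\|\A-\B\|_\Fr$ column by column), but both of its key steps are carried out differently. For the per-projection step, the paper applies a Frobenius-norm perturbation lemma for projectors to the pair $\U^\o_\i,\V^\o_\i$ and then transfers to the normals via the complement identity, paying a factor $\sqrt{\r}$ through $\|\Z^\o_\i\|_\Fr\le\sqrt{\r}\,\|\Z^\o_\i\|_2$; you instead exploit $(\U^\o_\i)^\T\a^\o_\i=\mathbf{0}$ so that $(\V^\o_\i)^\T\a^\o_\i$ is pure noise, which gives the tighter estimate $\|\a^\o_\i-\b^\o_\i\|\le\sqrt{2}\,\epsilon/\delta$ with no $\sqrt{\r}$, hence $\|\A-\B\|_\Fr\le\epsilon\sqrt{2(\m-\r)}/\delta$ versus the paper's $\epsilon\sqrt{2\r(\m-\r)}/\delta$. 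For the global step, the paper applies the same Frobenius perturbation lemma to $\A,\B$ (picking up $\|\B^\dagger\|=1/\sigma(\B)$ there), while you use an operator-norm $\sin\Theta$ argument and then relax $\|\sin\Theta\|_\Fr\le\sqrt{\r}\,\|\sin\Theta\|_2$, which reintroduces the $\sqrt{\r}$; the two routes land on the identical bound \eqref{mainBound}. Your worry about the ``naive route'' introducing $1/\sigma(\U^\o_\i)$ is well taken, but note the paper avoids it the same way you do, by taking the minimum over the two pseudoinverse norms and keeping only $\|\V^{\o\dagger}_\i\|\le1/\delta$. One observation worth recording: your columnwise estimate is genuinely a factor $\sqrt{\r}$ stronger than the paper's Corollary \ref{ABCor}, so splicing it into the paper's final two steps (Corollary \ref{TotalCor} and Lemma \ref{perpLem}) would yield $\dist_\Gr(\sU,\sW)\le\epsilon\sqrt{2(\m-\r)}/(\delta\,\sigma(\B))$, improving Theorem \ref{mainThm} by $\sqrt{\r}$; your own write-up gives this improvement back at the very end by bounding the $\ell_2$ norm of the sines by $\sqrt{\r}$ times their maximum.
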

\end{framed}
The proof of Theorem \ref{mainThm} is in Section \ref{proofSec}. In words, Theorem \ref{mainThm} shows that the error of the optimal estimate $\sW$ is bounded by the projection's noise-to-signal ratio $\epsilon/\delta$, except for a factor $\sqrt{2\r(\m-\r)}$ related to the degrees of freedom of an $\r$-dimensional subspace of $\R^\m$, and a term $\sigma(\B)$ that depends on the particular sampling and the orientation of $\sU$, as discussed in Section \ref{orientationSec}.
Therefore, given observed data $\B$ and signal-to-noise ratio, one can compute an upper bound to the error in the noisy subspace estimator. 

\begin{myExample}
Continuing from Example 1, if we observe noisy projections 
\begin{align*}
\begin{matrix}
\V^\o_1=\left[ \begin{matrix} 1.1 & 0.9 \\ 1.2 & 1.8 \\ 0.9 & 3.2 \end{matrix}\right]
\\ \\ \\
\end{matrix},
\begin{matrix}
\\ 
\V^\o_2=\left[ \begin{matrix} 2.1 & 2.1 \\ 1.9 & 3.1 \\ 2.1 & 3.8 \end{matrix}\right]
\\ \\
\end{matrix},
\begin{matrix}
\\ \\
\V^\o_3=\left[ \begin{matrix} 3.1 & 3.1 \\ 2.8 & 4.2 \\ 3.1 & 4.9 \end{matrix}\right]
\end{matrix}.
\end{align*} 
with a signal-to-noise ratio of $0.1$. In this case, the smallest singular value of $\B$ is $0.33$ and our bound is $0.59$, whereas the true distance between $\sU$ and $\sW$ is $0.29$.
\end{myExample}

\section{Motivation}
Learning low-dimensional structures that approximate a high-dimensional dataset is one of the most fundamental problems in science and engineering. However, in a myriad of modern applications, data is missing in large quantities. For example, even the most comprehensive metagenomics database \cite{benson2016genbank} is highly incomplete \cite{cribdon2020pia}; most drug-target interactions remain unknown \cite{zhang2019predicting}; in image inpainting the values of some pixels are missing due to faulty sensors and image contamination \cite{mairal2009online}; in computer vision features are often missing due to occlusions and tracking algorithms malfunctions \cite{vidal2008multiframe}; in recommender systems each user only rates a limited number of items \cite{park2015preference}; in a network, most nodes communicate in subsets, producing only a handful of all the possible measurements \cite{balzano2012high}.

These and other applications have motivated the vibrant field of matrix completion, which aims to fill the missing entries of a data matrix, and infer its underlying structure. Perhaps the most studied case in this field is that of low-rank matrix completion (LRMC) \cite{candes2009exact, candes2010power, recht2011simpler}, which assumes the data lies in a linear subspace. Another popular case is high-rank matrix completion (HRMC) \cite{balzano2012high}, which allows samples (columns) to lie in a union of subspaces. Mixture matrix completion (MMC) \cite{pimentel2018mixture} extends this idea to a full mixture (columns and rows) of low-rank matrices. More recently, low-algebraic-dimension matrix completion (LADMC) \cite{pimentel2017low, ongie2021tensor} further generalizes these assumptions to allow data in non-linear algebraic varieties. There are numerous variants and generalizations of these models, such as multi-view matrix completion (MVMC) \cite{ashraphijuo2017characterization1}, low-Tucker-rank tensor completion (LTRTC) \cite{ashraphijuo2017characterization2}, and low-CP-rank tensor completion (LCRTC) \cite{ashraphijuo2017fundamental}.

\ul{Learning a subspace from its canonical projections plays a central role in each of these cases}. In particular, the fundamental conditions specified in Theorem 1 in \cite{pimentel2015deterministic} for the noiseless case are of particular importance to derive:

\begin{itemize}[leftmargin=.5cm]
\item[\ding{182}]
Deterministic sampling conditions for unique completability in LRMC (Theorem 2, Lemma 8 in \cite{pimentel2016characterization}).

\item[\ding{183}]
The information-theoretic requirements and sample complexity of HRMC (Theorems 1, 2 in \cite{pimentel2016information}).

\item[\ding{184}]
The fundamental conditions for learning mixtures in MMC (Theorem 1 in \cite{pimentel2018mixture}).

\item[\ding{185}]
Identifiability conditions to learn tensorized subspaces in LADMC (Theorem 1 in \cite{pimentel2017low}, Lemmas 2, 3 in \cite{ongie2021tensor}).

\item[\ding{186}]
A characterization of uniquely completable patterns for MVMC (Theorem 3 in \cite{ashraphijuo2017characterization1}, Theorem 4 in \cite{ashraphijuo2017deterministic}).

\item[\ding{187}]
Unique completability conditions for LTRTC (Lemma 4, Theorem 4 in \cite{ashraphijuo2017characterization2}, Lemma 9, Theorem 7 in \cite{ashraphijuo2016deterministic}).

\item[\ding{188}]
Deterministic conditions for unique completability in LCRTC (Lemma 18 in \cite{ashraphijuo2017fundamental}).

\item[\ding{189}]
An algorithm for Robust PCA that does not require coherence assumptions (Algorithm 1, Lemma 1 in \cite{pimentel2017random}).

\item[\ding{190}]
Deterministic conditions for unique recovery in Robust LRMC (Lemma 2, Theorem 2 in \cite{ashraphijuo2018deterministic}).

\end{itemize}

Fundamentally, each of the results above project incomplete data $\{\x^\o_\i\}$ onto candidate projections $\{\sU^\o_\i\}$, and rely on Theorem 1 in \cite{pimentel2015deterministic} to discard sets of projections that are incompatible with a single subspace (corresponding to incorrect solutions).

\begin{shaded}
The main limitation of Theorem 1 in \cite{pimentel2015deterministic} is that it requires exact (noiseless) projections, which are rarely available in practice. Consequently so do its byproducts, such as the results described in \ding{182}-\ding{190}. This paper extends Theorem 1 in \cite{pimentel2015deterministic} to allow noisy projections, and has the potential to enable the generalization of results like \ding{182}-\ding{190} to noisy settings, thus expanding their practical applicability.
\end{shaded}

To see this, observe that Theorem 1 in \cite{pimentel2015deterministic} is the special case of Theorem \ref{mainThm} where $\epsilon=0$. In that case, since $\sU$ is in general position, $\sigma(\B)>0$, the bound in \eqref{mainBound} simplifies to $\dist_\Gr(\sU,\sW) = 0$, showing that $\sW=\sU$, thus recovering Theorem 1 in \cite{pimentel2015deterministic}. More generally, Theorem 1 shows that if the {\em noisy} projections $\{\sV^\o_\i\}$ (properly sampled according to \textbf{C1}-\textbf{C3}) are close to the {\em true} projections $\{\sU^\o_\i\}$ (within $\epsilon$), then the optimal estimator $\sW$ will be close to the true subspace $\sU$ (within $\epsilon':=\epsilon \sqrt{2\r(\m-\r)} / \delta\sigma(\B)$.


\section{Proof}
\label{proofSec}
To obtain the bound in Theorem \ref{mainThm} we first show that if $\U^\o_\i$ is close to $\V^\o_\i$, then the chordal distance $\dist_\Gr(\sU^\o_\i,\sV^\o_\i)$ is small (Corollary \ref{projCor}). Next we show that this chordal distance between two subspaces is equal to that of their null-spaces (Lemma \ref{perpLem}). This will bound the error on the null-space bases $\b^\o_\i$ (Lemma \ref{abLem}), which are used to construct $\B$. Finally, we bound the chordal error of $\B$ (Corollary \ref{TotalCor}), and show that it is equal to that of the estimator $\sW$ using again Lemma \ref{perpLem}.

To bound the chordal distance between $\sU^\o_\i$ and $\sV^\o_\i$ we use the following well-known result from perturbation theory \cite{li2013new}:

\begin{myLemma}[Perturbation bound on the projection operator]
\label{projLem}
Let $\P_{\M_1}$ and $\P_{\M_2}$ denote the projection operators onto $\spn\M_1$, and $\spn\M_2$. Suppose $\rnk\M_1=\rnk\M_2$. Then
\begin{align*}
\| \P_{\M_1} - \P_{\M_2} \|_\Fr
\ \leq \
\sqrt{2} \ \|\M_1-\M_2\|_\Fr \min\{ \|\M_1^\dagger \| , \|\M_2^\dagger \| \},
\end{align*}
where $^\dagger$ denotes the Moore-Penrose inverse.
\end{myLemma}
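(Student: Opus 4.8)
The plan is to prove the bound using only the symmetry and idempotence of orthogonal projectors together with the defining Moore--Penrose identities $\P_{\M_1}=\M_1\M_1^\dagger$, $\P_{\M_2}=\M_2\M_2^\dagger$, and $\M_j\M_j^\dagger\M_j=\M_j$ for $j=1,2$. Abbreviate $\P_1:=\P_{\M_1}$ and $\P_2:=\P_{\M_2}$. The starting point is the algebraic identity $\P_1-\P_2=\P_1(\I-\P_2)-(\I-\P_1)\P_2$, which is immediate on expanding the right-hand side. The structural observation that drives the whole argument is that these two summands are orthogonal in the Frobenius inner product: using symmetry of the projectors and idempotence, $\<\P_1(\I-\P_2),(\I-\P_1)\P_2\>_\Fr=\tr[(\I-\P_2)\P_1(\I-\P_1)\P_2]=0$ because $\P_1(\I-\P_1)=0$. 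By the Pythagorean theorem this yields $\|\P_1-\P_2\|_\Fr^2=\|\P_1(\I-\P_2)\|_\Fr^2+\|(\I-\P_1)\P_2\|_\Fr^2$.

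Next I would use the equal-rank hypothesis to show the two summands have the same norm. By cyclicity of the trace and idempotence, $\|\P_1(\I-\P_2)\|_\Fr^2=\tr[\P_1(\I-\P_2)]=\rnk\M_1-\tr(\P_1\P_2)$, and symmetrically $\|(\I-\P_1)\P_2\|_\Fr^2=\rnk\M_2-\tr(\P_1\P_2)$. Since $\rnk\M_1=\rnk\M_2$, these coincide, so $\|\P_1-\P_2\|_\Fr^2=2\|\P_1(\I-\P_2)\|_\Fr^2=2\|(\I-\P_1)\P_2\|_\Fr^2$. This is the crucial reduction: I am now free to bound whichever of the two equal terms is smaller, which is exactly what produces the minimum in the statement.

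The final ingredient converts each term into a bound involving $\M_1-\M_2$. Because $(\I-\P_1)\M_1=\M_1-\M_1\M_1^\dagger\M_1=0$, I can insert a zero: $(\I-\P_1)\P_2=(\I-\P_1)\M_2\M_2^\dagger=(\I-\P_1)(\M_2-\M_1)\M_2^\dagger$. Submultiplicativity of the Frobenius norm against the spectral norm gives $\|(\I-\P_1)(\M_2-\M_1)\M_2^\dagger\|_\Fr\le\|\I-\P_1\|_2\,\|\M_2-\M_1\|_\Fr\,\|\M_2^\dagger\|_2$, and since $\|\I-\P_1\|_2\le 1$ we obtain $\|(\I-\P_1)\P_2\|_\Fr\le\|\M_1-\M_2\|_\Fr\|\M_2^\dagger\|$. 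The mirror-image computation — passing to transposes, which leaves the Frobenius norm unchanged, and using $(\I-\P_2)\M_2=0$ — gives $\|\P_1(\I-\P_2)\|_\Fr\le\|\M_1-\M_2\|_\Fr\|\M_1^\dagger\|$. Substituting each bound into the corresponding copy of the equality from the previous step and keeping the smaller of the two produces the factor $\min\{\|\M_1^\dagger\|,\|\M_2^\dagger\|\}$; taking the square root recovers the constant $\sqrt2$ and the claimed inequality.

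I expect the genuinely nontrivial step to be the equal-rank reduction. Without it, directly bounding the two summands separately only yields $\|\P_1-\P_2\|_\Fr^2\le\|\M_1-\M_2\|_\Fr^2(\|\M_1^\dagger\|^2+\|\M_2^\dagger\|^2)$, which is strictly weaker than the stated $2\min\{\cdot\}^2$. The identity $\|\P_1(\I-\P_2)\|_\Fr^2=\rnk\M_1-\tr(\P_1\P_2)$, and its counterpart for the other term, is precisely what collapses the sum of the two squared norms into twice a single one, and it is exactly here that $\rnk\M_1=\rnk\M_2$ is consumed. Everything else is routine projector algebra and norm submultiplicativity.
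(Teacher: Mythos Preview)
Your argument is correct; each step checks out, and the equal-rank reduction you highlight is indeed exactly where the hypothesis $\rnk\M_1=\rnk\M_2$ is used. However, the paper does not actually prove this lemma at all: it is quoted as a known perturbation-theory result with a citation to \cite{li2013new} and then applied directly. So there is no paper-side proof to compare your approach against; what you have written is essentially the standard derivation behind the cited inequality.
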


Using Lemma \ref{projLem}, we can directly bound the chordal distance between $\sU^\o_\i$ and $\sV^\o_\i$ as follows:

\begin{myCorollary}
\label{projCor}
Let $\P_{\U^\o_\i}$ and $\P_{\V^\o_\i}$ denote the projection operators onto $\sU^\o_\i$ and $\sV^\o_\i$. Then
\begin{align*}
\| \P_{\U^\o_\i} - \P_{\V^\o_\i} \|_\Fr \ \leq \ \epsilon \sqrt{2\r} / \delta.
\end{align*}

\end{myCorollary}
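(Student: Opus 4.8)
The plan is to apply Lemma~\ref{projLem} directly with $\M_1 = \U^\o_\i$ and $\M_2 = \V^\o_\i$, and then control each of the three factors on the right-hand side. First I would verify the rank hypothesis: since $\sU$ is $\r$-dimensional and in general position and $|\o_\i| = \r+1$, the projection $\sU^\o_\i$ is a hyperplane in $\R^{\r+1}$, so $\rnk\U^\o_\i = \r$; moreover, because $\Z^\o_\i$ is drawn from a density $\nu$ (condition \textbf{C4}), the perturbed matrix $\V^\o_\i = \U^\o_\i + \Z^\o_\i$ also has rank $\r$ $\nu$-almost surely, exactly as already noted in the construction of $\sW$. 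Hence Lemma~\ref{projLem} applies and gives
\[
\| \P_{\U^\o_\i} - \P_{\V^\o_\i} \|_\Fr \ \leq \ \sqrt{2}\ \|\U^\o_\i - \V^\o_\i\|_\Fr \ \|(\V^\o_\i)^\dagger\|.
\]

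Next I would bound the two remaining factors. By \eqref{vEq} we have $\U^\o_\i - \V^\o_\i = -\Z^\o_\i$, and since $\Z^\o_\i$ is $(\r+1)\times\r$ its rank is at most $\r$, so $\|\Z^\o_\i\|_\Fr \leq \sqrt{\r}\,\|\Z^\o_\i\|_2 \leq \sqrt{\r}\,\epsilon$ by \textbf{C4}. For the pseudoinverse, since $\V^\o_\i$ has full column rank $\r$, its spectral norm satisfies $\|(\V^\o_\i)^\dagger\| = 1/\sigma(\V^\o_\i) \leq 1/\delta$ by \textbf{C5}. Combining the three estimates yields $\| \P_{\U^\o_\i} - \P_{\V^\o_\i} \|_\Fr \leq \sqrt{2}\cdot\sqrt{\r}\,\epsilon\cdot(1/\delta) = \epsilon\sqrt{2\r}/\delta$, which is the claim.

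The only real subtlety is the passage from the spectral-norm noise bound in \textbf{C4} to the Frobenius norm required by Lemma~\ref{projLem}; this is where the dimension enters, costing the factor $\sqrt{\r}$, and it relies on the cheap but essential observation that $\Z^\o_\i$ has at most $\r$ nonzero singular values. Everything else is a direct substitution, so I do not expect a genuine obstacle here. I would anticipate that this per-projection $\sqrt{\r}$ loss, together with a $\sqrt{\m-\r}$ factor arising when the $\N=\m-\r$ null vectors $\b^\o_\i$ are stacked into $\B$, accounts for the $\sqrt{2\r(\m-\r)}$ appearing in Theorem~\ref{mainThm}.
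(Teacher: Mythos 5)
Your proof is correct and follows essentially the same route as the paper: apply Lemma~\ref{projLem}, bound $\|\Z^\o_\i\|_\Fr \leq \sqrt{\r}\,\|\Z^\o_\i\|_2 \leq \sqrt{\r}\,\epsilon$ via the rank of $\Z^\o_\i$, and bound the pseudoinverse term by $1/\sigma(\V^\o_\i) \leq 1/\delta$ using \textbf{C5}. Your explicit verification of the equal-rank hypothesis of Lemma~\ref{projLem} is a small addition the paper leaves implicit, but the argument is otherwise identical.
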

\begin{proof}
By Lemma \ref{projLem}:
\begin{align*}
\| \P_{\U^\o_\i} - \P_{\V^\o_\i} \|_\Fr
\leq
\sqrt{2} \ \|\U^\o_\i-\V^\o_\i\|_\Fr \min\{ \|\U^{\o\dagger}_\i\|, \|\V^{\o\dagger}_\i\| \}.
\end{align*}
The corollary follows because
\begin{align*}
\|\U^\o_\i-\V^\o_\i\|_\Fr 
\ = \
\|\Z^\o_\i\|_\Fr 
\ \leq \
\sqrt{\r}\|\Z^\o_\i\|
\ \leq \
\epsilon \sqrt{\r},
\end{align*}
and because
\begin{align*}
\min\{ \|\U^{\o\dagger}_\i\|, \|\V^{\o\dagger}_\i\| \} 
\ \leq \
\|\V^{\o\dagger}_\i\| 
\ = \
1/\sigma(\V^\o_\i)
\ \leq \
1/\delta.
\end{align*}
where the last inequality follows by assumption \textbf{C5}.
\end{proof}

Recall that to construct our estimate $\sW$ we use the null-space of $\sV^\o_\i$, which is $\nu$-almost surely spanned by a single vector $\b^\o_\i$. To bound the error of $\b^\o_\i$, let $\a^\o_\i$ be a normal vector spanning the null-space of $\sU^\o_\i$ such that $\<\a^\o_\i,\b^\o_\i\> \geq 0$. In words, $\a^\o_\i$ is the noiseless version of $\b^\o_\i$. To bound the error between $\b^\o_\i$ and $\a^\o_\i$ we will use the following Lemma, which states that the chordal distance between any two subspaces is equal to that of their null-spaces.

\begin{myLemma}[Orthogonal complement projection bound]
\label{perpLem}
Let $\P^\perp_{\M_1}$ and $\P^\perp_{\M_2}$ denote the projection operators onto $\ker \M_1^\T$, and $\ker \M_2^\T$. Then
\begin{align*}
\| \P^\perp_{\M_1} - \P^\perp_{\M_2} \|_\Fr \ = \ \| \P_{\M_1} - \P_{\M_2} \|_\Fr.
\end{align*}
\end{myLemma}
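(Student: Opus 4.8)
The plan is to exploit the fact that for any matrix $\M$, the ambient space $\R^\m$ decomposes orthogonally as $\spn\M \oplus \ker\M^\T$, so the two projection operators are complementary: $\P_\M + \P^\perp_\M = \I$. First I would write this identity for $\M_1$ and $\M_2$, giving $\P^\perp_{\M_1} = \I - \P_{\M_1}$ and $\P^\perp_{\M_2} = \I - \P_{\M_2}$. Subtracting, the identity matrices cancel, and one obtains $\P^\perp_{\M_1} - \P^\perp_{\M_2} = (\I - \P_{\M_1}) - (\I - \P_{\M_2}) = -(\P_{\M_1} - \P_{\M_2})$. Taking Frobenius norms and using $\|-\X\|_\Fr = \|\X\|_\Fr$ immediately yields the claimed equality.

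The only substantive point to justify is the orthogonal decomposition $\R^\m = \spn\M \oplus \ker\M^\T$ and hence the complementarity $\P_\M + \P^\perp_\M = \I$. This is standard linear algebra — the column space of $\M$ is the orthogonal complement of $\ker\M^\T$ — but I would state it explicitly so the reader sees that $\P^\perp_\M$, defined in the lemma as the projector onto $\ker\M^\T$, is genuinely the orthogonal-complement projector of $\P_\M$ within $\R^\m$. (Note the dimension count is consistent: $\P_{\M_1}$ and $\P_{\M_2}$ are both $\m\times\m$ here, as are $\P^\perp_{\M_1}$ and $\P^\perp_{\M_2}$.)

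I do not anticipate a real obstacle: the proof is a two-line manipulation once complementarity is invoked. The mild subtlety worth a sentence of care is purely bookkeeping — making sure the projectors in question are all operators on the same ambient $\R^\m$ (rather than, say, onto $\R^{|\o_\i|}$), so that the cancellation of $\I$ is literally valid; in the intended application $\M_i = \U^\o_i$ (or $\V^\o_i$) viewed with its columns embedded in $\R^{\r+1}$, and both sides of the asserted equality are norms of $(\r+1)\times(\r+1)$ matrices. With that noted, the computation $\|\P^\perp_{\M_1}-\P^\perp_{\M_2}\|_\Fr = \|(\I-\P_{\M_1})-(\I-\P_{\M_2})\|_\Fr = \|\P_{\M_1}-\P_{\M_2}\|_\Fr$ finishes the proof.
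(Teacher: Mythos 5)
Your proposal is correct and matches the paper's proof exactly: both rest on the complementarity identity $\P^\perp_{\M} = \I - \P_{\M}$, after which the identity matrices cancel under subtraction and the Frobenius norm is unchanged by negation. The paper states this in a single line; your version simply makes the same argument explicit.
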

\begin{proof}
Simply recall that $\P^\perp_{\M_1} = (\I-\P_{\M_1})$, where $\I$ denotes the identity matrix.
\end{proof}

Using this result, we can directly bound the error of $\b^\o_\i$:

\begin{myLemma}
\label{abLem}
$\nu$-almost surely, $\|\a^\o_\i-\b^\o_\i\| \leq \epsilon \sqrt{2\r} / \delta$.
\end{myLemma}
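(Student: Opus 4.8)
The plan is to connect the error between the unit normal vectors $\a^\o_\i$ and $\b^\o_\i$ to the chordal distance between the two one-dimensional subspaces they span, and then apply the results already in hand. First I would note that $\ker\M^\T$ with $\M = \U^\o_\i$ (respectively $\M = \V^\o_\i$) is exactly the one-dimensional null-space of $\sU^\o_\i$ (respectively $\sV^\o_\i$), spanned by the unit vector $\a^\o_\i$ (respectively $\b^\o_\i$). Combining Corollary~\ref{projCor} with Lemma~\ref{perpLem} gives
\begin{align*}
\| \P^\perp_{\U^\o_\i} - \P^\perp_{\V^\o_\i} \|_\Fr
\ = \
\| \P_{\U^\o_\i} - \P_{\V^\o_\i} \|_\Fr
\ \leq \
\epsilon\sqrt{2\r}/\delta,
\end{align*}
so it remains only to show that $\|\a^\o_\i - \b^\o_\i\| \leq \| \P^\perp_{\U^\o_\i} - \P^\perp_{\V^\o_\i} \|_\Fr$.

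For that last inequality I would use the fact that for unit vectors, the rank-one projectors are $\P^\perp_{\U^\o_\i} = \a^\o_\i(\a^\o_\i)^\T$ and $\P^\perp_{\V^\o_\i} = \b^\o_\i(\b^\o_\i)^\T$. A direct computation of $\|\a\a^\T - \b\b^\T\|_\Fr^2$ (dropping the $\o_\i$ decorations) expands via $\|XY^\T\|_\Fr^2 = \|X\|^2\|Y\|^2$ and the trace identity to $2 - 2\<\a,\b\>^2 = 2(1-\<\a,\b\>)(1+\<\a,\b\>)$. On the other hand $\|\a-\b\|^2 = 2 - 2\<\a,\b\> = 2(1-\<\a,\b\>)$. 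Since $\<\a^\o_\i,\b^\o_\i\>\geq 0$ by our choice of sign convention, we have $1 + \<\a,\b\>\geq 1$, hence $\|\a-\b\|^2 \leq \|\a\a^\T - \b\b^\T\|_\Fr^2$, which is the needed bound. Chaining this with the displayed inequality above yields the Lemma. The ``$\nu$-almost surely'' qualifier enters exactly where we invoked that $\sV^\o_\i$ is a hyperplane with a one-dimensional null-space spanned by a single $\b^\o_\i$, which holds $\nu$-almost surely as already observed when $\sW$ was constructed.

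The main obstacle — really the only non-routine point — is the elementary inequality $\|\a-\b\| \leq \|\a\a^\T-\b\b^\T\|_\Fr$ and, more precisely, verifying that the sign convention $\<\a^\o_\i,\b^\o_\i\>\geq 0$ is exactly what makes it go through (without it one only gets $\|\a-\b\|$ or $\|\a+\b\|$ bounded, whichever is smaller). Everything else is bookkeeping: identifying $\ker\M^\T$ with the relevant null-spaces and stacking Corollary~\ref{projCor} with Lemma~\ref{perpLem}. I do not anticipate any difficulty with the almost-sure genericity, since it is the same event already used to define $\b^\o_\i$.
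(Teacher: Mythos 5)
Your proposal is correct and follows essentially the same route as the paper: reduce $\|\a^\o_\i-\b^\o_\i\|$ to $\|\a^\o_\i\a^{\o\T}_\i-\b^\o_\i\b^{\o\T}_\i\|_\Fr$ via the sign convention $\<\a^\o_\i,\b^\o_\i\>\geq 0$ (your factorization $2(1-t)\leq 2(1-t)(1+t)$ is the same inequality the paper writes as $2-2\<\a,\b\>\leq 2-2\<\a,\b\>^2$), identify the rank-one projectors with $\P^\perp_{\U^\o_\i}$ and $\P^\perp_{\V^\o_\i}$, and chain Lemma~\ref{perpLem} with Corollary~\ref{projCor}. No gaps.
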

\begin{proof}
First notice that
\begin{align*}
\|\a^\o_\i-\b^\o_\i\|^2
\ &= \ \|\a^\o_\i\|^2-2\<\a^\o_\i,\b^\o_\i\> + \|\b^\o_\i\|^2 \\
\ &\leq \ \|\a^\o_\i\|^4 -2\<\a^\o_\i,\b^\o_\i\>^2 + \|\b^\o_\i\|^4 \\
\ &= \ \|\a^\o_\i\a^{\o\T}_\i-\b^\o_\i\b^{\o\T}_\i\|^2_\Fr,
\end{align*}
where the inequality follows because $\|\a^\o_\i\|=\|\b^\o_\i\|=1$ and $0 \leq \<\a^\o_\i,\b^\o_\i\> \leq 1$ by construction. Next recall that $\a^\o_\i$ and $\b^\o_\i$ are defined as the normal bases of $\ker \U^{\o\T}_\i$ and $\ker \V^{\o\T}_\i$, so
\begin{align*}
\|\a^\o_\i\a^{\o\T}_\i - \b^\o_\i\b^{\o\T}_\i\|_\Fr
\ &= \
\| \P^\perp_{\U^\o_\i} - \P^\perp_{\V^\o_\i} \|_\Fr \\
\ &= \
\| \P_{\U^\o_\i} - \P_{\V^\o_\i} \|_\Fr 
\ \leq \ \epsilon \sqrt{2\r} / \delta,
\end{align*}
where the last two steps follow directly from Lemma \ref{perpLem} and Corollary \ref{projCor}.
\end{proof}

Recall that our estimate $\sW$ is given by $\ker\B^\T$, where $\B=[\b_1,\dots,\b_\N]$, and $\b_\i \in \R^\m$ is equal to $\b^\o_\i$ in the coordinates of $\o_\i$, and zeros elsewhere. To bound the error of $\B$, define $\A=[\a_1,\dots,\a_\N]$, where $\a_\i \in \R^\m$ is equal to $\a^\o_\i$ in the coordinates of $\o_\i$, and zeros elsewhere. In words, $\A$ is the noiseless version of $\B$. Using Lemma \ref{abLem} we can directly bound the error between $\A$ and $\B$ as follows:

\begin{myCorollary}
\label{ABCor}
$\nu$-almost surely, $\|\A-\B\|_\Fr \leq \epsilon \sqrt{2\r(\m-\r)}/\delta $.
\end{myCorollary}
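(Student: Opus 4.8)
The plan is to expand the squared Frobenius norm columnwise and apply Lemma \ref{abLem} to each column. Since $\A$ and $\B$ have the same number of columns, we have $\|\A-\B\|_\Fr^2 = \sum_{\i=1}^\N \|\a_\i-\b_\i\|^2$, so it suffices to control each term $\|\a_\i-\b_\i\|$ and then count the columns via condition \textbf{C2}.

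The key (and only nonroutine) observation is that $\a_\i$ and $\b_\i$ are, by construction, the zero-padded versions of $\a^\o_\i$ and $\b^\o_\i$: both are obtained by placing the entries of the respective $(\r+1)$-dimensional vector in the \emph{same} coordinate set $\o_\i$ and zeros elsewhere. Hence $\a_\i-\b_\i$ is supported on $\o_\i$ and agrees there with $\a^\o_\i-\b^\o_\i$, so $\|\a_\i-\b_\i\| = \|\a^\o_\i-\b^\o_\i\|$. By Lemma \ref{abLem}, this is at most $\epsilon\sqrt{2\r}/\delta$ $\nu$-almost surely.

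Combining these and using \textbf{C2} ($\N=\m-\r$), I would obtain $\|\A-\B\|_\Fr^2 \leq \N\,(\epsilon\sqrt{2\r}/\delta)^2 = 2\r(\m-\r)\epsilon^2/\delta^2$, and the claim follows by taking square roots. I do not anticipate a genuine obstacle here; the single point worth stating explicitly is the zero-padding remark, which guarantees that the per-coordinate bound of Lemma \ref{abLem} transfers verbatim to the ambient vectors $\a_\i,\b_\i\in\R^\m$.
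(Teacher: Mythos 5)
Your proof is correct and follows exactly the same route as the paper: expand the squared Frobenius norm over columns, use the zero-padding observation to reduce each column difference to $\|\a^\o_\i-\b^\o_\i\|$, apply Lemma \ref{abLem} termwise, and count $\N=\m-\r$ columns via \textbf{C2}. No differences worth noting.
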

\begin{proof}
Since $\a_\i$ and $\b_\i$ are identical to $\a^\o_\i$ and $\b^\o_\i$, except filled with zeros in the same locations,
\begin{align*}
\|\A-\B\|_\Fr^2
\ &= \
\sum_{\i=1}^\N \|\a_\i-\b_\i\|_\Fr^2
\ = \
\sum_{\i=1}^\N \|\a^\o_\i-\b^\o_\i\|_\Fr^2 \\
\ &\leq \
\sum_{\i=1}^\N 2\r\epsilon^2/\delta^2 
\ = \ 2\r(\m-\r)\epsilon^2/\delta^2,
\end{align*}
where the last two steps follow by Lemma \ref{abLem}, and because $\N = \m-\r$ by assumption $\textbf{C2}$.
\end{proof}

At this point we can use again Lemma \ref{projLem} to bound the chordal error of $\B$ as follows:

\begin{myCorollary}
\label{TotalCor}
$\nu$-almost surely,
\begin{align*}
\|\P_\A-\P_\B\|_\Fr \ \leq \ \frac{\sqrt{2} \ \epsilon \sqrt{2\r(\m-\r)} }{\delta \sigma(\B)}.
\end{align*}
\end{myCorollary}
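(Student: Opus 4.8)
The plan is to apply Lemma \ref{projLem} directly to the pair $\M_1 = \A$ and $\M_2 = \B$, exactly as was done for the individual projections in Corollary \ref{projCor}. For this to be valid I first need to check that $\rnk \A = \rnk \B$. By construction, each $\a_\i$ and each $\b_\i$ is a single nonzero vector supported on the coordinates of $\o_\i$, so $\A$ and $\B$ are both $\m \times \N$ matrices with $\N$ nonzero columns. The noiseless matrix $\A$ has rank $\N = \m - \r$ because $\sU = \ker \A^\T$ is $\r$-dimensional (this is the content of Theorem 1 in \cite{pimentel2015deterministic}, which holds since \textbf{C1}-\textbf{C3} are in force). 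For $\B$, the same structural argument coupled with \textbf{C5} and the almost-sure genericity of the noise gives $\rnk \B = \N$ as well; indeed $\sigma(\B)$ appears in the denominator of the claimed bound, so we are implicitly assuming $\sigma(\B) > 0$, i.e. $\B$ has full column rank $\N$. Hence $\rnk \A = \rnk \B = \N$ and Lemma \ref{projLem} applies.

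With the rank condition verified, Lemma \ref{projLem} gives
\begin{align*}
\|\P_\A - \P_\B\|_\Fr \ \leq \ \sqrt{2} \ \|\A - \B\|_\Fr \min\{\|\A^\dagger\|, \|\B^\dagger\|\} \ \leq \ \sqrt{2} \ \|\A - \B\|_\Fr \ \|\B^\dagger\|.
\end{align*}
Now I bound the two remaining factors. For the perturbation term, Corollary \ref{ABCor} gives $\|\A - \B\|_\Fr \leq \epsilon \sqrt{2\r(\m-\r)}/\delta$. For the pseudoinverse term, since $\B$ has full column rank $\N$, its Moore-Penrose inverse satisfies $\|\B^\dagger\| = 1/\sigma(\B)$, where $\sigma(\cdot)$ denotes the smallest singular value as defined in the introduction. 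Substituting both bounds yields
\begin{align*}
\|\P_\A - \P_\B\|_\Fr \ \leq \ \sqrt{2} \cdot \frac{\epsilon \sqrt{2\r(\m-\r)}}{\delta} \cdot \frac{1}{\sigma(\B)} \ = \ \frac{\sqrt{2} \ \epsilon \sqrt{2\r(\m-\r)}}{\delta \sigma(\B)},
\end{align*}
which is exactly the claim.

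I do not expect a serious obstacle here: this corollary is a direct repackaging of Lemma \ref{projLem}, Corollary \ref{ABCor}, and the identity $\|\B^\dagger\| = 1/\sigma(\B)$. The only point requiring a little care is the rank hypothesis of Lemma \ref{projLem} — verifying $\rnk \A = \rnk \B = \N$ — and the implicit assumption that $\sigma(\B) > 0$ so that the stated bound is meaningful and $\|\B^\dagger\| = 1/\sigma(\B)$ holds; this is guaranteed $\nu$-almost surely by the genericity of the noise density together with \textbf{C1}-\textbf{C3}, mirroring the discussion in the introduction of why $\ker\B^\T$ is a well-defined $\r$-dimensional estimator. Once that is noted, the computation above finishes the proof.
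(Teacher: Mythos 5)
Your proof is correct and follows essentially the same route as the paper: apply Lemma \ref{projLem} to $\A$ and $\B$, invoke Corollary \ref{ABCor} for $\|\A-\B\|_\Fr$, and use $\|\B^\dagger\|=1/\sigma(\B)$. Your explicit verification of the rank hypothesis $\rnk\A=\rnk\B=\N$ is a welcome bit of extra care that the paper's proof leaves implicit.
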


\begin{proof}
From Lemma \ref{projLem} and Corollary \ref{ABCor},
\begin{align*}
\|\P_\A-\P_\B\|_\Fr
\ &\leq \
\sqrt{2} \ \|\A-\B\|_\Fr \ \min\{ \|\A^\dagger\|,\|\B^\dagger\| \} \\
\ &\leq \
\sqrt{2} \ (\epsilon \sqrt{2\r(\m-\r)}/\delta) \ \min\{ \|\A^\dagger\|,\|\B^\dagger\| \} \\
\ &\leq \  \sqrt{2} \ (\epsilon \sqrt{2\r(\m-\r)}/\delta) \ \|\B^\dagger\|.
\end{align*}
\end{proof}

With this, we have a complete proof of our main result:

\begin{proof}[Proof (Theorem \ref{mainThm})]
By definition, $\sW=\ker\B^\T$, so $\P_\W = \P_\B^\perp$, and similarly $\P_\U=\P_\A^\perp$. We thus have:
\begin{align*}
\dist_\Gr(\sU,\sW) \ :&= \ \frac{1}{\sqrt{2}}\|\P_\U-\P_\W\|_\Fr
\ = \ \frac{1}{\sqrt{2}} \|\P_\A^\perp - \P_\B^\perp\|_\Fr \\
\ &= \ \frac{1}{\sqrt{2}} \|\P_\A - \P_\B\|_\Fr
\ \leq \ \frac{\epsilon \sqrt{2\r(\m-\r)} }{\delta \sigma(\B)},
\end{align*}
where the last line follows by Lemma \ref{perpLem} and Corollary \ref{TotalCor}.
\end{proof}

\section{Subspace Orientation and Sampling}
\label{orientationSec}


Theorem \ref{mainThm} shows that the error of the optimal estimate $\sW$ is bounded by the projection's noise-to-signal ratio $\epsilon/\delta$ scaled by a factor of $\sqrt{2\r(\m-\r)}$ related to the degrees of freedom of an $\r$-dimensional subspace of $\R^\m$, and a term $1/\sigma(\B)$. Recall that $\sigma(\B)$ denotes the smallest singular value of $\B$, and that $\B$ is constructed from the null-spaces of the projections $\sV^\o_\i$ (see \eqref{estimatorEq}). Consequently, $\sigma(\B)$ depends on the particular sampling of the projections and the orientation of $\sU$ in a very intricate way. To better understand this dependency, observe that $\sigma(\B)$ is determined by the joint orientation of its columns $\{\b_\i\}$. If the residual of any one column when projected onto the rest is small, then $\sigma(\B)$ will be small. Since $\b_\i$ is zero in the entries not in $\o_\i$, the residuals will strongly depend on the projection samplings $\{\o_\i\}$ and their overlaps. To better study this, let the $\i^{\rm th}$ column of $\O \in \{0,1\}^{\m \times \N}$ take the value $1$ in the rows in $\o_\i$, so that the $\i^{\rm th}$ column of $\O$ indicates the coordinates involved in the $\i^{\rm th}$ projection. This way, $\O$ also indicates the zero entries in $\B$. It is possible that the projections (and hence the columns of $\B$) share a large overlap. For example, consider the following {\em sampling pattern}:
\begin{align}
\label{sampling1Eq}
\O_1 \ = \ 
\left[ \begin{array}{c}
\hspace{.3cm} \Scale[1.5]{\boldsymbol{1}} \hspace{.3cm} \\ \hline
\\
\Scale[1.5]{\I} \\ \\
\end{array}\right]
\begin{matrix}
\left. \begin{matrix} \\ \end{matrix} \right\} \r \hspace{.7cm} \\
\left. \begin{matrix} \\ \\ \\ \end{matrix} \right\} \m-\r,
\end{matrix}
\end{align}
\noindent where $\boldsymbol{1}$ represents a block of all 1's. This sampling satisfies the identifiability conditions \textbf{C1}-\textbf{C3}, and has a {\em maximal} overlap (each projection shares $\r$ coordinates with one another; if two projections shared any more, they would violate condition \textbf{C3}). These large overlaps would result in more non-zero common entries between $\b_\i$ and $\b_\j$, potentially resulting smaller residuals. In contrast, consider the following sampling pattern:
\begin{center}
\includegraphics[width=4.25cm]{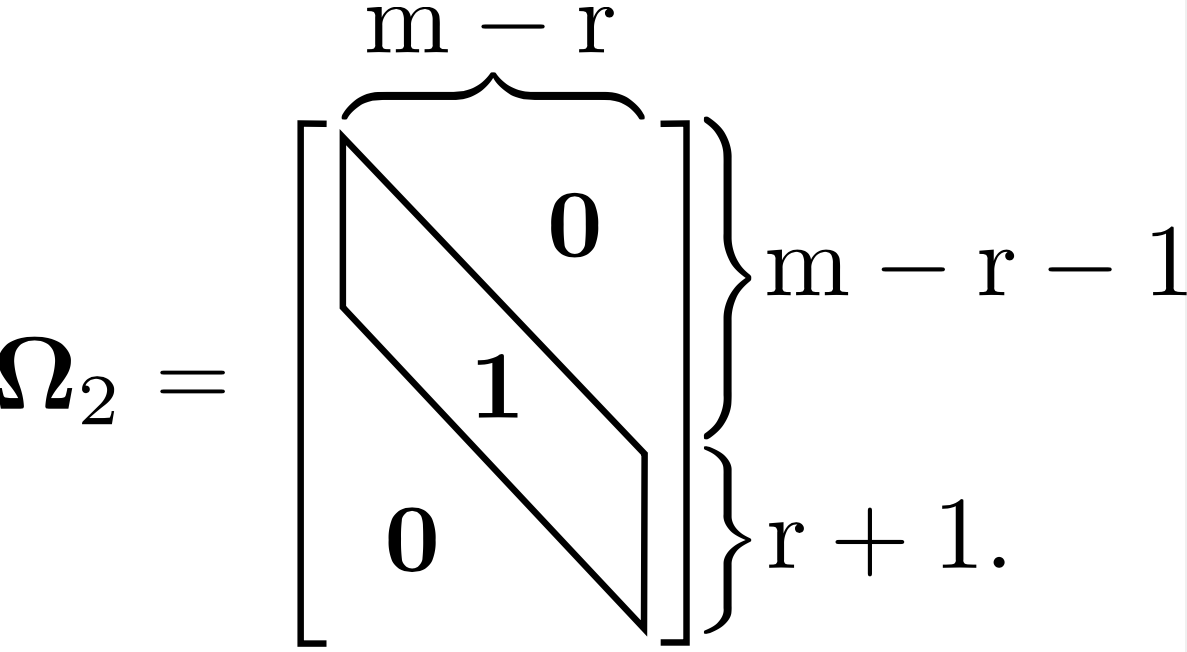}
\end{center}

\vspace{-2.25cm}
\begin{align}
\label{sampling2Eq}
\phantom{1}
\end{align}
\vspace{.45cm}

This sampling also satisfies the identifiability conditions \textbf{C1}-\textbf{C3}, but has fewer overlaps. For instance, the first and last columns share no coordinates whatsoever. Fewer overlaps will result in fewer non-zero common entries between $\b_\i$ and $\b_\j$, potentially resulting in larger residuals.

In conclusion, the sampling pattern will affect $\sigma(\B)$. However, the residuals do not depend on the sampling pattern alone, but also on the {\em partial} residuals of their overlapped (non-zero) entries, which in turn depend on the orientation of $\sU$. To see this recall that for every $\o_\i$, $\nu$-almost every subspace $\sU$ has a basis $\U^\o_\i$ in the following column-echelon form:
\begin{align*}
\U^\o_\i \ = \ \left[ \begin{array}{c}
\\ \Scale[2]{\I} \\ \\ \hline
\Scale[1.5]{\phantom{A}}{\al_\i^\T} \hspace{.2cm} \\
\end{array}\right]
\begin{matrix}
\left. \begin{matrix} \\ \\ \\ \end{matrix} \right\} \r \\
\left. \begin{matrix} \\ \end{matrix} \right\} 1,
\end{matrix}
\end{align*}
where $\al_\i \in \R^{\r}$ characterizes the specific projection $\sU^\o_\i$ and is completely arbitrary (in other words, $\al_\i$ could take any value, each resulting in a different projection $\sU^\o_\i$).
Notice that $\a_\i' := [\al_\i^\T, -1 ]^\T$ is in the null-space of $\sU^\o_\i$ (because $\U^{\o\T}_\i \a_\i'=0$), so $\b^\o_\i$ is essentially a small perturbation (within $\epsilon\sqrt{2\r (\m-\r)}/\delta$; see Corollary \ref{abLem}) of $\a^\o_\i=\a_\i'/\|\a_\i'\|$. In conclusion, $\sigma(\B)$ directly depends on $\al_\i,\al_\j$, which characterize the projections $\sU^\o_\i,\sU^\o_\j$, and are completely arbitrary.

Finally, recall that $\B$ is an $\m \times (\m-\r)$ matrix. As the gap $(\m-\r)$ between the ambient dimension and the subspace dimension grows, $\B$ has more columns, and the likelihood of obtaining one small residual increases \cite{cai2013distributions} \---- except perhaps if, for example, each column of $\B$ has a unique coordinate, as is the case with $\O_1$ in \eqref{sampling1Eq}. Ultimately, a smaller residual will result in a smaller $\sigma(\B)$, and a looser bound. This is verified in our experiments, where our bound becomes looser as the ambient dimension $\m$ increases away from $\r$.

To summarize, $\sigma(\B)$ encodes the intricate dependency of our bound on the particular sampling of the projections and on the particular orientation of $\sU$, all in a single term that can be easily and directly computed from the observed data.

\section{Experiments}

\begin{figure*}
    \centering
    \includegraphics[width=\textwidth]{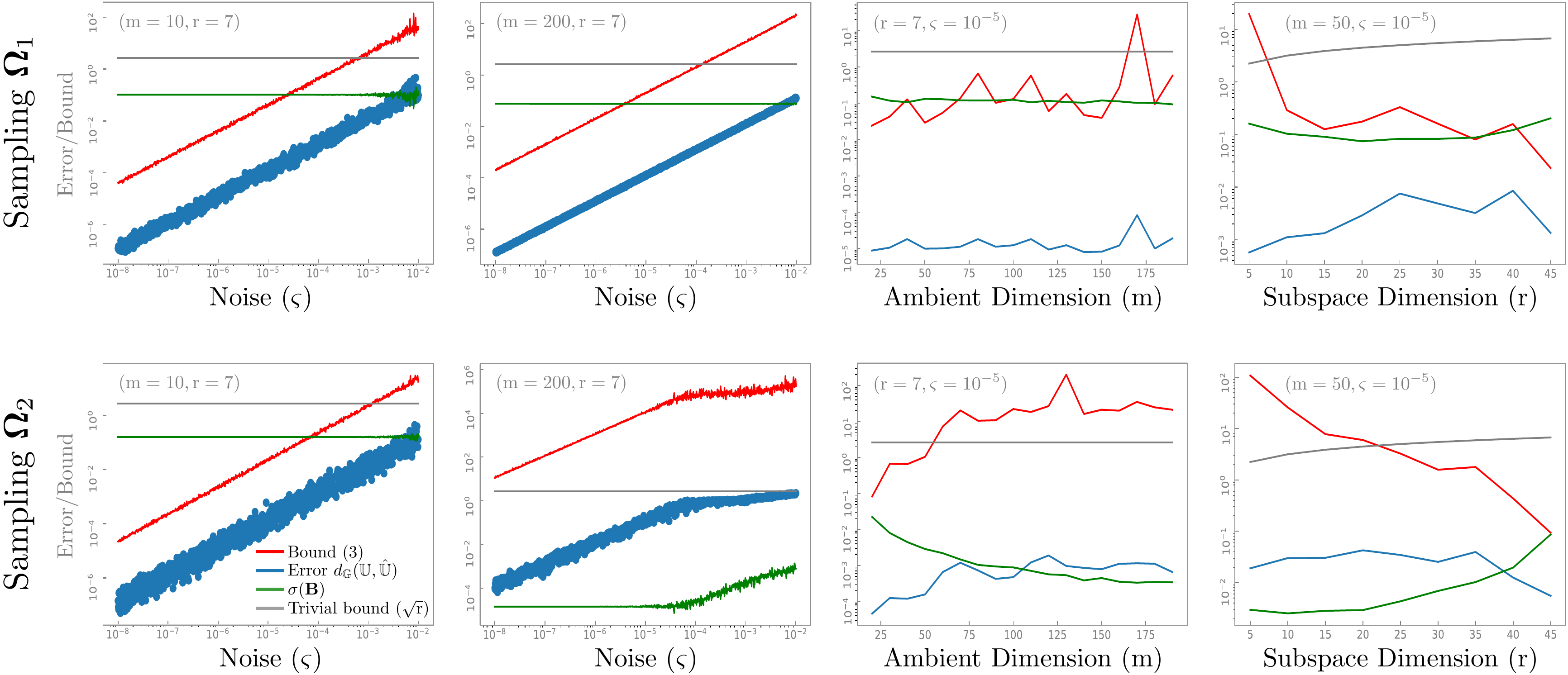}
    \caption{Our bound behaves nicely as a function of noise, but becomes looser as the term $\sigma(\B)$ shrinks when the dimension gap $(\m-\r)$ grows. Nonetheless, in all cases our bound closely follows the exact error except for a constant gap. Notice that both error and bound depend heavily on the observation pattern. Notice the significantly smaller errors and bounds obtained with sampling $\O_1$. Code available at \href{https://github.com/ksrivastava1/identifying-subspaces }{https://github.com/ksrivastava1/identifying-subspaces} .} 
    \label{fig:error_noise}
    \vspace{-.25cm}
\end{figure*}

In this section we present numerical simulations to further analyze our bound from Theorem  \ref{mainThm} as a function of the noise level $\epsilon$, the ambient dimension $\m$, and the subspace dimension $\r$. As discussed in Section \ref{orientationSec}, our bound also depends on a term $1/\sigma(\B)$, which encodes the information about the particular sampling pattern, and the orientation of $\sU$. To analyze this dependency we will study two sampling patterns, namely $\O_1$ and $\O_2$ as depicted in \eqref{sampling1Eq} and \eqref{sampling2Eq}, corresponding to two extremes of valid sampling overlaps.

In each experiment, we generated an $\m \times \r$ basis $\U$ with i.i.d.~random entries drawn from a standard normal distribution. Then for each $\i=1,\dots,\N$, we generated an $(\r+1) \times \r$ noise matrix $\Z^\o_\i$ with i.i.d.~random entries drawn from a normal distribution with zero mean and variance $\varsigma^2$. We then generated $\V^\o_\i$ according to \eqref{vEq}. Finally, we computed our bound parameters $\epsilon = \max_\i{\|\Z^\o_i\|}$, and $\delta=\min_\i {\sigma(\V^\o_\i)}$. In our experiments we numerically compare the exact subspace estimation error $\dist_\Gr(\sU,\sW)$ with our bound, highlighting changes in $\delta$ and $\sigma(\B)$, together with the upper bound of the chordal distance, given by $\sqrt{r}$.

\vspace{0.1cm}
\noindent \textbf{Effect of the noise.} In our first experiment we investigate how our bound from Theorem \ref{mainThm} changes as a function of the noise level $\varsigma$, with $\m=\{10,200\}$ and $\r=7$ fixed. In each of $1000$ independent trials we selected $\varsigma$ uniformly at random in the range $(10^{-8},10^{-2})$. The results are in Figure \ref{fig:error_noise}. They show that the optimal subspace estimator $\sW$ is not too sensitive to noise, and that our bound from Theorem \ref{mainThm} can be tight if the gap $(\m-\r)$ between the ambient dimension and the subspace dimension is small. However, the bound becomes much looser if this gap increases. This is an artifact of the $1/\sigma(\B)$ term in our bound. As discussed in Section \ref{orientationSec}, this term does not change much with noise, but does change with the dimension gap and the sampling pattern (notice the significantly smaller error and bound obtained with sampling $\O_1$). We further explore this dependency in our next experiments.


\vspace{0.1cm}
\noindent \textbf{Effect of the Ambient Dimension.} In our second experiment we study how our bound from Theorem \ref{mainThm} changes as a function of the ambient dimension $m$, with $\r=7$ and $\varsigma=10^{-5}$ fixed. The results are summarized in Figure \ref{fig:error_noise}, where each point shows the average over $100$ independent trials. Notice that the bound is extremely tight for small $\m$, and slowly becomes looser as $\m$ increases. This is primarily due to the decrease in the smallest singular value of $\B$, which as discussed in Section \ref{orientationSec}, depends heavily on the sampling pattern $\O$ and tends to become smaller as $\m$ grows \cite{cai2013distributions}. Notice the significantly different results produced by changing nothing but the sampling pattern. While both samplings $\O_1, \O_2$ in \eqref{sampling1Eq} are {\em valid} (i.e., they satisfy the identifiability conditions \textbf{C1}-\textbf{C3}), the errors and bounds produced by each sampling are notoriously different.







\vspace{0.1cm}
\noindent \textbf{Effect of the Subspace Dimension} In our final experiment, we study how our bound from Theorem \ref{mainThm} changes as a function of the subspace dimension $r$, with $m = 50$ and $\varsigma = 10^{-5}$ fixed. The results are summarized in Figure \ref{fig:error_noise}, where each point shows the average over $100$ independent trials. Consistent with our previous experiments, both error and bound depend heavily on the sampling pattern. Consistent with the previous experiment and with our discussion in Section \ref{orientationSec}, our bound becomes looser whenever the {\em gap} $(\m-\r)$ between the ambient dimension and the subspace dimension grows. We find it interesting that the exact error also increases with the same pattern. We conjecture that this is because our estimator relies on $\sU^\perp$, which becomes smaller and hence easier to estimate as this gap decreases.


\section{Conclusions}
In this paper we derive an upper bound for the optimal subspace estimator obtained from noisy versions of its canonical projections. This contribution generalizes a fundamental result (Theorem 1 in \cite{pimentel2015deterministic}) with important applications in matrix completion, subspace clustering, and related problems. Unfortunately, the term $\sigma(\B)$ decreases as the gap between the ambient and subspace dimensions grow, resulting in a looser bound. Our future work will investigate alternative, tighter bounds that do not rely on $\sigma(\B)$, and that degrade nicely as this dimension gap grows.



\bibliographystyle{IEEEtran}
\bibliography{main}

\end{document}